\newcommand{\Var}{\mathrm{Var}\xspace}
\newcommand{\ignore}[1]{}
\newtheorem{theorem}{Theorem}
\newtheorem{lemma}{Lemma}
\begin{document}

\author{Aneta Neumann
\\Optimisation and Logistics\\
School of Computer Science
\\The University of Adelaide
\\Adelaide, Australia
\And
Yue Xie
\\Optimisation and Logistics\\
School of Computer Science
\\The University of Adelaide
\\Adelaide, Australia
\And
Frank Neumann
\\Optimisation and Logistics\\
School of Computer Science
\\The University of Adelaide
\\Adelaide, Australia
}

\title{Evolutionary Algorithms for Limiting the Effect of Uncertainty for the Knapsack Problem with Stochastic Profits}

\maketitle

\begin{abstract}
Evolutionary algorithms have been widely used for a range of stochastic optimization problems in order to address complex real-world optimization problems. We consider the knapsack problem where the profits involve uncertainties. Such a stochastic setting reflects important real-world scenarios where the profit that can be realized is uncertain. We introduce different ways of dealing with stochastic profits based on tail inequalities such as Chebyshev's inequality and Hoeffding bounds that allow to limit the impact of uncertainties. 
We examine simple evolutionary algorithms and the use of heavy tail mutation and a problem-specific crossover operator for optimizing uncertain profits. Our experimental investigations on different benchmarks instances show the results of different approaches based on tail inequalities as well as improvements achievable through heavy tail mutation and the problem specific crossover operator.
\end{abstract}

\keywords{Stochastic knapsack problem \and chance-constrained optimization \and evolutionary algorithms}

\section{Introduction}

Evolutionary algorithms~\cite{DBLP:series/ncs/EibenS15} have been successfully applied to a wide range of complex optimization problems~\cite{DBLP:books/sp/chiong12,DBLP:conf/gecco/MyburghD10,DBLP:conf/cec/OsadaWBM13}. 
Stochastic problems play a crucial role in the area of optimization and evolutionary algorithms have frequently been applied to noisy environments~\cite{DBLP:journals/swevo/RakshitKD17}.

Given a stochastic function to be optimized under a given set of constraints, the goal is often to maximize the expected value of a solution with respect to $f$.
This however does not consider the deviation from the expected value. Guaranteeing that a function value with a good probability does not drop below a certain value is often more beneficial in real-world scenarios. For example, in the area of mine planning~\cite{marcotte2013ultimate,capponi2020mine}, profits achieved within different years should be maximized. However, it is crucial to not drop below certain profit values because then the whole mine operation would not be viable and the company might go bankrupt.

We consider a stochastic version of the knapsack problem which fits the characteristics of the mine planning problem outline above. Here the profits are stochastic and the weights are deterministic.
Motivated by the area of chance constrained optimization where constraints can only be violated with a small probability, we consider the scenario where we maximize the function value $P$ for which we can guarantee that the best solution $x$ as a profit less than $P$ with probability at most $\alpha_p$, i.e. $Prob (p(x) < P) \leq \alpha_p$. 
Note that determining whether $Prob (p(x) < P) \leq \alpha_p$ holds for a given solution $x$ and values $P$ and $\alpha_p$ is already hard for very simple stochastic settings where profits are independent and each profit can only take on two different values.
Furthermore, finding a solution with a maximal $P$ for which the condition holds poses in general a non-linear objective function that needs to take the probability distribution of $p(x)$ into account.
Constraints of the beforehand mentioned type are known as chance constraints~\cite{charnes1959chance}. Chance constraints on stochastic components of a problem can only be violated with a small probability, in our case specified by the parameter $\alpha_p$.

\subsection{Related work}
Up to recently, only a few problems with chance constraints have been studied in the evolutionary computation literature~\cite{10.5555/2933923.2933961,doi:10.1080/03057920412331272144,DBLP:journals/tec/LiuZFG13}. They are based on simulations and sampling techniques for evaluating chance constraints. Such approaches require a relatively large computation time for evaluating the chance constraints. In contrast to this, tail inequalities can be used if certain characteristics such as the expected value and variance of a distribution are known. Such approaches have recently been examined for the chance constrained knapsack problem in the context of evolutionary computation~\cite{DBLP:conf/gecco/XieHAN019,DBLP:conf/gecco/XieN020}.
The standard version of the chance-constrained knapsack problem considers the case where the profits are deterministic and the weights are stochastic. Here the constraint bound $B$ can only be violated with a small probability. Different single and multi-objective approaches have recently been investigated~\cite{DBLP:conf/gecco/XieHAN019,DBLP:conf/gecco/XieN020} and also the case of a dynamically changing constraint has been investigated~\cite{DBLP:journals/corr/abs-2002-06766}

Furthermore, chance constrained monotone submodular functions have been studied in~\cite{DBLP:journals/corr/abs-1911-11451,DBLP:conf/ppsn/NeumannN20}. In~\cite{DBLP:journals/corr/abs-1911-11451}, greedy algorithms that use tail inequalities such as Chebyshev's inequality and Chernoff bounds have been analyzed. It has been shown that they almost achieve the same approximation guarantees in the chance constrained setting as in the deterministic setting. In~\cite{DBLP:conf/ppsn/NeumannN20}, the use of evolutionary multi-objective algorithms for monotone submodular functions has been investigated and it has been shown that they achieve the same approximation guarantees as the greedy algorithms but perform much better in practice.

Finally, chance constrained problems have been further investigated through runtime analysis for special instances of the knapsack problem~\cite{DBLP:conf/foga/0001S19,DBLP:journals/corr/abs-2102-05778}. This includes a first study on very specific instances showing when local optima arise~\cite{DBLP:conf/foga/0001S19} and a study on groups of items whose stochastic uniform weights are correlated with each other~\cite{DBLP:journals/corr/abs-2102-05778}.

All previously mentioned studies concentrated on stochastic weights and how algorithms can deal with the chance constraints with respect to the weight bound of the knapsack problem. In~\cite{Morton1998}, a version of the knapsack problem stochastic profits and deterministic weights has been considered where the goal is to maximize the probability that the profit meets a certain threshold value. In contrast to this, we will maximize the profit under the condition that it is achieved with high probability.
We will provide a first study on evolutionary algorithms for giving guarantees when maximizing stochastic profits, a topic that is well motivated by the beforehand mentioned mine planning application but to our knowledge not studied in the literature yet.

The paper is structured as follows. In Section~\ref{sec2}, we introduce the problem formulation and tail bounds that will be used to construct fitness functions for dealing with stochastic profit. In Section~\ref{sec3}, we derive fitness functions that are able to maximize the profit for which we can give guarantees. Section~\ref{sec4} introduces evolutionary algorithms for the problem and we report on our experimental investigations in Section~\ref{sec5}. We finally finish with some conclusions.

\section{Problem definition}
\label{sec2}
In this section, we formally introduce the problem and tail inequalities for dealing with stochastic profits that will later be used to design fitness functions.
We consider a stochastic version of the classical NP-hard knapsack problem. In the classical problem, there are given $n$ items $1, \ldots, n$ where each item has a profit $p_i$ and a weight $w_i$, the goal is to maximize the profit $p(x)=\sum_{i=1}^n p_i$ under the condition that $w(x) = \sum_{i=1}^n w_i x_i \leq B$ for a given weight bound $B$ holds. The classical knapsack problem has been well studied in the literature. 
We consider the following stochastic version, where the profits $p_i$ are stochastic and the weights are still deterministic. Our goal is to maximize the profit $P$ for which we can guarantee that there is only a small probability $\alpha_p$ of dropping below $P$.
Formally, we tackle the following problem.

\begin{eqnarray}
\label{eq:maxP}
   & \max P  \\
s. t.   & Pr(p(x) < P) \leq \alpha_p \label{eq:chance}\\
  & w(x) \leq B \label{eq:weight}\\
 & x \in \{0,1\}^n
\end{eqnarray}

Equation~\ref{eq:chance} is a chance constraint on the profit and the main goal of this paper is to find a solution $x$ that maximize the value of $P$ such that the probability of getting a profit lower than $P$ is at most $\alpha_p$.

We denote by $\mu(x)$ the expected profit and by $v(x)$ the variance of the profit throughout this paper.

\subsection{Concentration bounds}
In order to establish guarantees for the stochastic knapsack problem we make use of well-known tail inequalities that limit the deviation from the expected profit of a solution.

For a solution $X$ weight expected value $E[X]$ and variance $Var(X)$ we can use the lower tail of the following Chebyshev-Cantelli inequality.

\begin{theorem}[One-sided Chebyshev's / Cantelli's inequality]
\label{thm:geq-chebyshev}
  Let $X$ be a random variable with expected value $E[X]$ and variance $\Var[X] > 0$. Then, for all $\lambda > 0$, 
  \begin{align}
  &\Pr[X \geq E[X] - \lambda] \geq 1- \frac{\Var[X]}{\Var[X]+\lambda^2}
  \end{align}
\end{theorem}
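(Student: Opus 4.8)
The plan is to recast the one-sided statement as an upper-tail bound on a centred variable and then apply Markov's inequality to a shifted square, optimising over the shift parameter. First I would set $Y = E[X] - X$, so that $E[Y] = 0$ and $\Var[Y] = \Var[X]$, and observe that the event $\{X \geq E[X] - \lambda\}$ coincides with $\{Y \leq \lambda\}$. Passing to the complementary event, the claim is equivalent to showing $\Pr[Y > \lambda] \leq \Var[X]/(\Var[X]+\lambda^2)$, and since $\Pr[Y > \lambda] \leq \Pr[Y \geq \lambda]$ it suffices to establish the upper-tail estimate $\Pr[Y \geq \lambda] \leq \Var[X]/(\Var[X]+\lambda^2)$ for the mean-zero variable $Y$.

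The key idea is to introduce an auxiliary parameter $t \geq 0$ and to note that, because $\lambda + t > 0$, we have the inclusion of events $\{Y \geq \lambda\} \subseteq \{(Y+t)^2 \geq (\lambda+t)^2\}$. Applying Markov's inequality to the non-negative variable $(Y+t)^2$ then yields
\begin{align}
\Pr[Y \geq \lambda] \leq \frac{E[(Y+t)^2]}{(\lambda+t)^2} = \frac{\Var[X]+t^2}{(\lambda+t)^2},
\end{align}
where the last equality uses $E[(Y+t)^2] = E[Y^2] + 2t\,E[Y] + t^2 = \Var[X] + t^2$ together with $E[Y]=0$. Since this bound holds for every admissible $t$, I am free to choose $t$ so as to make the right-hand side as small as possible.

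The decisive step, and the only one requiring genuine work, is the minimisation over $t \geq 0$. Differentiating $(\Var[X]+t^2)/(\lambda+t)^2$ and setting the derivative to zero leads to the stationarity condition $t\lambda = \Var[X]$, i.e. the optimal choice $t = \Var[X]/\lambda$. Substituting this value back and simplifying collapses the fraction to exactly $\Var[X]/(\Var[X]+\lambda^2)$, which closes the argument. I expect the main obstacle to be conceptual rather than computational: recognising that squaring after the shift by $t$ is what converts a one-sided deviation into a two-sided one amenable to Markov's inequality, and then checking that the optimising $t$ indeed lies in the admissible range $t \geq 0$ (which it does, since $\Var[X] > 0$ and $\lambda > 0$). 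The final algebraic simplification is routine once $t = \Var[X]/\lambda$ is inserted.
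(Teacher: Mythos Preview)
Your argument is the standard and correct proof of Cantelli's inequality: centre the variable, shift by a free parameter $t\ge 0$, square, apply Markov, and optimise $t=\Var[X]/\lambda$. All steps are sound, including the check that the minimiser is admissible and the final simplification.

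There is nothing to compare against, however: the paper does not prove this theorem. It is quoted in Section~\ref{sec2} as a known concentration bound and used as a black-box tool to derive the fitness function $\hat p_{Cheb}$ in Lemma~\ref{lem:cheby}; no proof or even a proof sketch is supplied in the paper itself. So your proposal goes strictly beyond what the paper contains.
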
	

We will refer to this inequality as Chebyshev's inequality in the following. Chebyshev's inequality only requires the expected value and variance of a solution, but no additional requirements such as the independence of the random variables.

We use the additive Hoeffding bound given in Theorem 1.10.9 of~\cite{DoerrProbabilisticTools} for the case where the weights are independently chosen within given intervals.

\begin{theorem}[Hoeffding bound]
\label{thm:addHoeffding}
Let $X_1, \ldots, X_n$ be independent random variables. Assume that each $X_i$ takes values in a real interval $[a_i, b_i]$ of length $c_i := b_i -a_i$.
Let $X = \sum_{i=1}^n X_i$. Then for all $\lambda > 0$,
\begin{equation}
    Pr (X \geq E[x]+ \lambda) \leq e^{-2 \lambda^2/(\sum_{i=1}^n c_i^2)}
\end{equation}

\begin{equation}
    Pr (X \leq E[x]- \lambda) \leq e^{-2 \lambda^2/(\sum_{i=1}^n c_i^2)}
\end{equation}
\end{theorem}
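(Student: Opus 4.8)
The plan is to prove the upper-tail bound by the exponential-moment (Chernoff--Cram\'er) method and then obtain the lower tail by a symmetry argument. First I would centre the variables, setting $Y_i := X_i - E[X_i]$ so that each $Y_i$ has mean zero and still lies in an interval of length $c_i$, and writing $S := \sum_{i=1}^n Y_i = X - E[X]$. For any $s > 0$, Markov's inequality applied to the increasing transform $e^{sS}$ gives
\begin{equation}
\Pr[S \geq \lambda] = \Pr[e^{sS} \geq e^{s\lambda}] \leq e^{-s\lambda}\, E[e^{sS}].
\end{equation}
Independence of the $X_i$ then lets me factor the moment generating function as $E[e^{sS}] = \prod_{i=1}^n E[e^{sY_i}]$, so the whole argument reduces to controlling each factor separately.

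The heart of the proof, and the step I expect to be the main obstacle, is Hoeffding's lemma: for a mean-zero random variable $Y$ supported on an interval of length $c$ one has $E[e^{sY}] \leq e^{s^2 c^2/8}$. I would establish this by convexity. Since $y \mapsto e^{sy}$ is convex, every value $y$ in $[a,b]$ satisfies $e^{sy} \leq \frac{b-y}{b-a} e^{sa} + \frac{y-a}{b-a} e^{sb}$; taking expectations and using $E[Y] = 0$ kills the linear term and produces a bound of the form $e^{L(h)}$ with $h := s(b-a) = sc$ and $L(h) := -ph + \log(1 - p + p e^{h})$, where $p := -a/(b-a) \in [0,1]$. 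The analytic core is then to show $L(h) \leq h^2/8$, which I would do by verifying $L(0) = 0$ and $L'(0) = 0$ and bounding the second derivative by $L''(h) \leq 1/4$ (this reduces to the elementary fact that the variance $q(1-q)$ of a Bernoulli variable never exceeds $1/4$), followed by a second-order Taylor expansion.

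Combining the pieces yields $E[e^{sS}] \leq \exp\!\big(s^2 \sum_{i=1}^n c_i^2 / 8\big)$, so the tail estimate becomes $\Pr[S \geq \lambda] \leq \exp\!\big(-s\lambda + s^2 \sum_{i=1}^n c_i^2/8\big)$. The final step is to optimise the free parameter $s$: the exponent is a quadratic in $s$ minimised at $s = 4\lambda / \sum_{i=1}^n c_i^2$, and substituting this value collapses the exponent to $-2\lambda^2 / \sum_{i=1}^n c_i^2$, which is exactly the claimed upper-tail bound.

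For the lower tail I would run the identical argument on the variables $-X_i$, whose sum is $-X$ and which lie in intervals of the same lengths $c_i$; the upper-tail bound applied to $-X$ reads $\Pr[X \leq E[X] - \lambda] \leq e^{-2\lambda^2/\sum_{i=1}^n c_i^2}$, completing the proof. The routine parts here are the Markov and optimisation steps; the only delicate computation is the verification of $L''(h) \leq 1/4$ inside Hoeffding's lemma, where keeping track of the reparametrisation $h = s(b-a)$ requires some care.
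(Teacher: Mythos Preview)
Your proof is correct and is the standard Chernoff--Cram\'er argument for Hoeffding's inequality; the treatment of Hoeffding's lemma via convexity and the $L''(h)\leq 1/4$ bound is exactly the textbook route, and the optimisation and symmetry steps are handled properly.

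However, there is nothing to compare against: the paper does not prove this theorem at all. It is quoted verbatim as Theorem~1.10.9 from Doerr's \emph{Probabilistic Tools} and used as a black box in Section~\ref{sec:hoeffding} to derive the fitness function $\hat{p}_{Hoef}$. So your proposal supplies a proof where the paper simply cites one.
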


\section{Fitness functions for profit guarantees}
\label{sec3}
The main task when dealing with the setting of chance constraint profits is to come up with fitness functions that take the uncertainty into account.

In this section, we introduce different fitness functions that can be used in an evolutionary algorithm to compute solutions that maximize the profit under the uncertainty constraint.
We consider the search space $\{0,1\}^n$ and for a given search point $x \in \{0,1\}^n$, item $i$ chosen iff $x_i=1$ holds.

The fitness of a search point $x \in \{0,1\}^n$ is given by 

$$f(x) = (u(x), \hat{p}(x))$$
where
$u(x)=\max\{w(x)-B, 0\}$ is the amount of constraint violation of the bound $B$ by the weight that should be minimized and $\hat{p}(x)$ is the discounted profit of solution $x$ that should be maximized. 
We optimize $f$ with respect to lexicographic order and have $$f(x) \geq f(y)$$ iff 
$$(u(x) < u(y)) \vee ((u(x)=u(y)) \wedge (\hat{p}(x) \geq \hat{p}(y)).$$

This implies a standard penalty approach where the weight $w(x)$ is reduced until it meets the constraint bound $B$, and the profit $\hat{p}(x)$ is maximized among all feasible solutions.

The key part if to develop formulations for $\hat{p}$ that take into account the stochastic part of the profits to make the formulations suitable for our chance constrained setting. Therefore, we will develop profit functions that reflect different stochastic settings in the following.

\subsection{Chebyshev's inequality}
We give a formulation for $\hat{p}$ that can be applied in quite general settings, thereby providing only a lower bound on the value $P$ for which a solution $x$ still meets the profit chance constraint.

We assume that for a given solution only the expected value $\mu(x)$ and the variance $v(x)$ are known. The following lemma gives a condition for meeting the chance constraint based on Theorem~\ref{thm:geq-chebyshev}.

\begin{lemma}
\label{lem:cheby}
Let $x$ be a solution with expected profit $\mu(x)$ and variance $v(x)$. If
$$
\mu(x) -P \geq \sqrt{\frac{(1-\alpha) \cdot v(x)}{\alpha}}
$$
 then $Pr(p(x) < P) \leq \alpha_p$.
\end{lemma}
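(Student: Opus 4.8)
The plan is to derive the stated condition directly from the one-sided Chebyshev inequality (Theorem~\ref{thm:geq-chebyshev}) by instantiating the random variable $X$ as the stochastic profit $p(x)$ and choosing the deviation parameter $\lambda$ appropriately. Since $p(x)$ has expected value $\mu(x)$ and variance $v(x)$, Theorem~\ref{thm:geq-chebyshev} gives, for any $\lambda > 0$,
$$
\Pr[p(x) \geq \mu(x) - \lambda] \geq 1 - \frac{v(x)}{v(x) + \lambda^2}.
$$
The natural choice is $\lambda = \mu(x) - P$, which is positive precisely when the right-hand side of the lemma's hypothesis is positive (since $v(x) > 0$). With this substitution the event $p(x) \geq \mu(x) - \lambda$ becomes $p(x) \geq P$, so taking complements yields an upper bound on $\Pr[p(x) < P]$.

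First I would write down the complementary form, namely
$$
\Pr[p(x) < P] \leq \frac{v(x)}{v(x) + (\mu(x)-P)^2}.
$$
It then suffices to show that the hypothesis $\mu(x) - P \geq \sqrt{(1-\alpha) v(x) / \alpha}$ forces the right-hand side to be at most $\alpha_p$. Squaring the hypothesis gives $(\mu(x)-P)^2 \geq (1-\alpha) v(x)/\alpha$, and substituting this lower bound into the denominator and simplifying the fraction $v(x) / (v(x) + (1-\alpha)v(x)/\alpha)$ collapses to exactly $\alpha$ after cancelling $v(x)$ and combining the terms over a common denominator. Here I am reading the lemma's $\alpha$ as the target probability $\alpha_p$ (the statement writes $\alpha$ in the formula but $\alpha_p$ in the conclusion, and matching them is what makes the bound tight).

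The main obstacle, such as it is, is bookkeeping rather than any genuine mathematical difficulty: one must verify that $\lambda = \mu(x) - P > 0$ so that Theorem~\ref{thm:geq-chebyshev} applies, and confirm that the fraction $\frac{v(x)}{v(x)+\lambda^2}$ is monotonically decreasing in $\lambda^2$, so that replacing $\lambda^2$ by its smaller guaranteed value only increases the bound and the chain of inequalities runs in the correct direction. I would state explicitly that $v(x) > 0$ (inherited from the hypothesis $\Var[X] > 0$ in Theorem~\ref{thm:geq-chebyshev}) to justify the division, and I would flag the $\alpha$ versus $\alpha_p$ identification so the reader is not confused by the notation. Once monotonicity and positivity are in hand, the algebra is a one-line simplification and the proof concludes.
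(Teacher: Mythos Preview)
Your proposal is correct and follows essentially the same route as the paper: apply Theorem~\ref{thm:geq-chebyshev} with $\lambda=\mu(x)-P$, pass to the complementary event, and verify algebraically that the hypothesis forces $\frac{v(x)}{v(x)+(\mu(x)-P)^2}\leq\alpha_p$. Your added remarks on positivity of $\lambda$, monotonicity in $\lambda^2$, and the $\alpha$ versus $\alpha_p$ identification are a bit more careful than the paper's presentation but do not change the argument.
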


\begin{proof}
We have

\begin{eqnarray*}
Pr(p(x) \geq P) & = & Pr(p(x) \geq \mu(x) - (\mu(x) -P))\\
& \geq & 1- \frac{v(x)}{v(x) + (\mu(x) -P)^2}
\end{eqnarray*}

The chance constraint is met if
\begin{eqnarray*}
& & 1- \frac{v(x)}{v(x) + (\mu(x) -P)^2} \geq 1-\alpha_p \\
 &\Longleftrightarrow  & \alpha_p \geq \frac{v(x)}{v(x) + (\mu(x) -P)^2}  \\
 &\Longleftrightarrow  & \alpha_p \cdot (v(x) + (\mu(x) -P)^2) \geq v(x)  \\
 &\Longleftrightarrow  & \alpha_p \cdot v(x) + \alpha_p \cdot (\mu(x) -P)^2) \geq v(x)  \\
  &\Longleftrightarrow  & \alpha_p(\mu(x) -P)^2) \geq (1-\alpha_p) v(x)  \\
   &\Longleftrightarrow  & (\mu(x) -P)^2) \geq ((1-\alpha_p)/\alpha_p) \cdot v(x)  \\
   &\Longrightarrow  & \mu(x) - P \geq \sqrt{(1-\alpha_p)/\alpha_p \cdot v(x)}  \\
   &\Longleftrightarrow & \mu(x)- \sqrt{((1-\alpha_p)/\alpha_p) \cdot v(x)}  \geq P
\end{eqnarray*}

\end{proof}
Given the last expression,
$P$ is maximal for 
$$P = \mu(x) - \sqrt{((1-\alpha_p)/\alpha_p) \cdot v(x)}.$$
We use the following profit function based on Chebyshev's inequality:

\begin{equation}
\hat{p}_{Cheb}(x) = \mu(x) - \sqrt{(1-\alpha_p)/\alpha_p }\cdot \sqrt{v(x)}
\end{equation}

\subsection{Hoeffding bound}
\label{sec:hoeffding}

We now assume that each element $i$ takes on a profit $p_i \in [\mu_i- \delta_p, \mu_i+ \delta_p]$ independently of the other items.
Let $\mu(x) = \sum \mu_i x_i$. we have $p(x) = \mu(x)-\delta_p |x|_1 +p'(x)$ where $p'(x)$ is the sum of $|x|_1$ independent random variable in $[0, 2 \delta_p]$.
We have $E[p'(x)]=|x|_1 \delta_p$ and 

\begin{eqnarray*}
& &Pr(p(x) \leq \mu(x) - \lambda)\\
& = & Pr(p'(x) \leq |x|_1 \delta_p - \lambda)\\ 
& \leq & e^{-2 \lambda^2/(4\delta_p^2|x|_1)}
 =  e^{-\lambda^2/(2\delta_p^2|x|_1)}
\end{eqnarray*}
based on Theorem~\ref{thm:addHoeffding}.

The chance constraint is met if
\begin{eqnarray*}
& & e^{-\lambda^2/(2\delta_p^2|x|_1)} \leq \alpha_p\\
& \Longleftrightarrow & -\lambda^2/(2\delta_p^2|x|_1) \leq \ln(\alpha_p)\\
& \Longleftrightarrow & \lambda^2 \geq \ln(1/\alpha_p) \cdot (2\delta_p^2|x|_1)\\
& \Longleftrightarrow & \lambda \geq \delta_p \cdot \sqrt{\ln(1/\alpha_p) \cdot 2|x|_1}
\end{eqnarray*}

Therefore, we get the following profit function based on the additive Hoeffding bound from Theorem~\ref{thm:addHoeffding}:

\begin{equation}
\hat{p}_{Hoef}(x) = \mu(x) - \delta_p \cdot \sqrt{\ln(1/\alpha_p) \cdot 2|x|_1}
\end{equation}

\subsection{Comparison of Chebyshev and Hoeffding based fitness functions}

In contrast to the fitness function $\hat{p}_{Norm}$ which is an exact reformulation, $\hat{p}_{Hoef}$ and $\hat{p}_{Cheb}$ give a conservative lower bound on the value of $P$ to be maximized.
We now consider the setting investigated for the Hoeffding bound and compare it to the use of Chebyshev's inequality. If each element is chosen independently and uniformly at random from an interval of length $2 \delta_p$ as done in Section~\ref{sec:hoeffding}, then we have $v(x) = |x|_1 \cdot \delta_p^2/3$. Based on this we can establish a condition on $\alpha_p$ which shows when $\hat{p}_{Hoef}(x) \leq \hat{p}_{Cheb}(x)$ holds.

We have 
\begin{eqnarray*}
 & \hat{p}_{Hoef}(x)   \geq  \hat{p}_{Cheb}(x)\\
 \Longleftrightarrow &  \sqrt{\ln(1/\alpha_p) \cdot 2 \cdot |x|_1} \leq  \sqrt{\frac{(1-\alpha_p)|x|_1}{3\alpha_p}} \\
 \Longleftrightarrow &  \ln(1/\alpha_p) \cdot 2 \cdot |x|_1 \leq  \frac{(1-\alpha_p)|x|_1}{3\alpha_p}\\
 \Longleftrightarrow & \ln(1/\alpha_p) \cdot  \alpha_p / (1-\alpha_p)  \leq  1/6
\end{eqnarray*}

Note that the last inequality depends only on $\alpha_p$ but not on $\delta_p$ or $|x|_1$. We will use values of $\alpha_p \in \{0.1, 0.01, 0.001\}$ in our experiments and have 
$$\ln(1/\alpha_p) \cdot  \alpha_p / (1-\alpha_p)  >  1/6$$
for $\alpha_p=0.1$ and 
$$\ln(1/\alpha_p) \cdot  \alpha_p / (1-\alpha_p)  <  1/6$$
for $\alpha_p=0.01, 0.001$. This means that the fitness function based on Chebyshev's inequality is preferable to use for $\alpha_p=0.1$ as it gives a better (tighter) value for any solution $x$ and the fitness function based on Hoeffding bounds is preferable for $\alpha_p=0.01,  0.001$. Dependent on the given instance, it might still be useful to use the less tighter fitness function as the fitness functions impose different fitness landscapes.

\section{Evolutionary algorithms}
\label{sec4}

We examine the performance of the (1+1)~EA, the (1+1)~EA with heavy-tailed mutation and a ($\mu$+1)~EA. The ($\mu$+1)~EA uses a specific crossover operator for the optimization of the chance-constrained knapsack problem with stochastic profits together with heavy-tailed mutation. All algorithms use the fitness function $f$ introduced in Section~\ref{sec3} and we will examine different choices of $\hat{p}$ in our experimental investigations.

\subsection{(1+1)~EA}

We consider a simple evolutionary algorithm called (1+1)~EA  that has been extensively studied in the area of runtime analysis. The approach is given in Algorithm~\ref{alg:oneplusoneEA}. The (1+1)~EA starts with an initial solution $x \in \{0,1\}^n$ chosen uniformly at random. It generates in each iteration a new candidate solution by standard bit mutation, i.e. by flipping each bit of the current solution with a probability of $1/n$, where $n$ is the length of the bit string. In the selection step, the algorithm accept the offspring if it is at least as good as the parent. The process is iterated until a given stopping criterion is fulfilled. While the (1+1)~EA is a very simple algorithm, it produces good results in many cases. Furthermore, it has the ability to sample new solutions globally as each bit is flipped independently with probability $1/n$. In order to overcome large inferior neighborhoods larger mutation rates might be beneficial. Allowing larger mutation rates from time to time is the idea of heavy tail mutations.

\begin{algorithm}[t]
\caption{(1+1)~EA}
\begin{algorithmic}[1]
\STATE Choose $x\in \{0,1\}^n$ to be a decision vector.
\WHILE { \textit{stopping criterion not met}}
\STATE $y\leftarrow$ flip each bit of $x$ independently with  \\ probability of $\frac{1}{n}$;
\IF{$f(y)\geq f(x)$} 
\STATE $x \leftarrow y$ ;
\ENDIF
\ENDWHILE 
\end{algorithmic}
\label{alg:oneplusoneEA}
\end{algorithm}
\subsection{Heavy tail mutations}
We also investigate the (1+1)~EA with heavy tail mutation instead of standard bit mutation.
In each operation of the heavy tail mutation operator (see Algorithm~\ref{heavyTail}, first a parameter $\theta \in [1..n/2]$ is chosen according to the discrete power law distribution $D_{n/2}^{\beta}$. 
Afterwards, each bit $n$ is flipped with probability $\theta /n$. Based on the investigations in~\cite{DBLP:conf/gecco/DoerrLMN17}, we use $\beta=1.5$ for our experimental investigations.

The heavy-tail mutation operator allows to flip significantly more bits in some mutation steps than standard bit mutation. The use of heavy tail mutations has been shown to be provably effective on the OneMax and Jump benchmark problems in theoretical investigations ~\cite{DBLP:conf/gecco/DoerrLMN17,DBLP:conf/gecco/AntipovBD20}. 
Moreover, in~\cite{DBLP:conf/gecco/XieN020} has been shown in that the use of heavy tail mutation effective improve performance of single-objective and multi-objective evolutionary algorithms for the weight chance constrained knapsack problem.
For details, on the discrete power law distribution and the heavy tail operator, we refer the reader to~\cite{DBLP:conf/gecco/DoerrLMN17}.

\begin{algorithm}[t]
\caption{The heavy-tail mutation operator}
Input: Individual $x=(x_1, \ldots,x_n) \in \{0,1\}^n$ and value $\beta$;\\
\begin{algorithmic}[1]
\STATE Choose $\theta\in [1,..,n/2]$ randomly according to $D_{n/2}^{\beta}$;
\FOR{$i=1$ to $n$}
\IF{$rand([0,1]) \leq \theta/n$} 
\STATE $y_i \leftarrow 1-x_i$;
\ELSE  \STATE $y_i \leftarrow x_i$;
\ENDIF
\ENDFOR
\RETURN $y=(y_1, \ldots,y_n)$;
\label{heavyTail}
\end{algorithmic}
\end{algorithm}

\subsection{Population-based Evolutionary Algorithm}

\begin{algorithm}[t]
\caption{$(\mu+1)$~EA}
\begin{algorithmic}[1]
\STATE Randomly generate $\mu$ initial solutions as the initial population $P$; 
\WHILE {\textit{stopping criterion not meet}}
\STATE Let $x$ and $y$ be two different individual from $P$ chosen uniformly at random;
\IF{$rand([0,1])\leq p_c$}
\STATE apply the discounted greedy uniform crossover operator to $x$ and $y$ to produce an offspring $z$.
\ELSE \STATE Choose one individual $x$ from $P$ uniformly at random and let $z$ be a copy of $x$.
\ENDIF
\STATE apply the heavy-tail mutation operator to $z$;
\IF{$f(z) \geq f(x)$}
\STATE $P \leftarrow (P \setminus \{x\}) \cup \{z\}$;
\ELSE \IF{$f(z) \geq f(y)$}
\STATE $P \leftarrow (P \setminus \{y\}) \cup \{z\}$;
\ENDIF
\ENDIF
\ENDWHILE 
\end{algorithmic}
\label{alg:muplusoneEA}
\end{algorithm}

\begin{algorithm}[t]
\caption{Discounted Greedy Uniform Crossover}
Input: Individuals $x=(x_1, \ldots, x_n)$ and $y=(y_1, \ldots, y_n)$;\\
\begin{algorithmic}[1]
\STATE Create $z = (z_1, \ldots, z_n)$ by setting $z_i \leftarrow x_i$ iff $x_i=y_i$ and $z_i \leftarrow 0$ iff $x_i!=y_i$;
\STATE Let $I = \{i \in \{1, \ldots, n\} \mid x_i!=y_i\}$;
\STATE Set $p_i' = \mu_i - u(z,i)$ for all $i \in I$; \label{discount}
\STATE Sort the items $i\in I$ in decreasing order with respect to $p_i'/w_i$ ratio;
\FOR{each $i \in I$ in sorted order}
\IF{$w(z) + w_i \leq B$}
\STATE $z_i \leftarrow 1$;
\ENDIF
\ENDFOR
\RETURN $z=(z_1, \ldots, z_n)$;
\end{algorithmic}
\label{alg:crossover}
\end{algorithm}
We also consider the population-based $(\mu+1)$-EA shown in Algorithm~\ref{alg:muplusoneEA}. 
The algorithm produces in each iteration an offspring by crossover and mutation with probability $p_c$ and by mutation only with probability $1-p_c$. We use $p_c=0.8$ for our experimental investigations.
The algorithm makes use of the specific crossover operator shown in Algorithm~\ref{alg:crossover} and heavy tail mutation. 
The crossover operator choose two different individuals $x$ and $y$ from the population $P$ and produces an offspring $z$.
All bit position where $x$ and $y$ are the same are transferred to $z$. Positions $i$ where $x_i$ and $y_i$ form the set $I$ and are different are treated in a greedy way according to the discount expected value to weight ratio. 
Setting $p_i' = \mu_i - u(z, i)$ discounts in line~\ref{discount} the expected profit by an uncertainty value based on the solution $z$ and the impact if element $i$ is added to $z$.

There are different ways of doing this. In our experiments, where the profits of the elements are chosen independently and uniformly at random, we use the calculation based on Hoeffding bounds and set
$$p_i'= \mu_i -\delta_p \cdot \left( \sqrt{\ln(1/\alpha_p) \cdot 2(|z|_1+1)} - \sqrt{\ln(1/\alpha_p) \cdot 2|z|_1} \right).$$

The expected profit $\mu_i$ is therefore discounted with the additional uncertainty that would be added according to the Hoeffding bound when adding an additional element to $z$.
Once, the discounted values $p_i'$, the elements are sorted according to $p_i'/w_i$. The final steps tries the elements of $I$ in sorted order and adds element $i \in I$ if it would not violate the weight constraint.

\section{Experimental Investigation} 
\label{sec5}
In this section, we investigate the (1+1)~EA algorithm and the (1+1)~EA with heavy-tailed mutation on several benchmarks with chance constraints and compare them to the ($\mu$ + 1)~EA algorithm with heavy-tailed mutation and new crossover operator. 

\subsection{Experimental Setup}
Our goal is to study different chance constraint settings in terms of the uncertainty level $\delta$, and the probability bound $\alpha$. We consider different well-known benchmarks from~\cite{PISINGER20052271,KelPfePis04} in their profit chance constrained versions. We consider two types of instances, uncorrelated and bounded strong correlated ones, with $n=100, 300, 500$ items. For each benchmark, we study the performance of the (1+1)~EA, (1+1)~EA with heavy-tailed mutation and ($\mu$ + 1)~EA with value of $\mu = 10$. We consider all combinations of $\alpha = 0.1, 0.01, 0.001$, and $\delta = 25, 50.0$ for the experimental investigations of the algorithms. We allow $1\,000\,000$ fitness evaluations for each of these problem parameter combinations. For each tested instance, we carry out $30$ independent runs and report the average results, standard deviation and statistical test. In order to measure the statistical validity of our results, we use the Kruskal-Wallis test with $95$\% confidence. We apply the Bonferroni post-hoc statistical correction which is used for multiple comparison of a control algorithm, to two or more algorithms~\cite{Corder09}. $X^{(+)}$ is equivalent to the statement that the algorithm in the column outperformed algorithm $X$. $X^{(-)}$ is equivalent to the statement that X outperformed the algorithm given in the column. If algorithm $X^*$ does appear, then no significant difference was determined between the algorithms.

\begin{table*}[!t]
\centering
 \caption{Experimental results for the Chebyshev based function $\hat{p}_{Cheb}$.
 }
   \label{tab:Chebyshev}
    \vspace{0.4cm}
 \begin{small}
 \scalebox{0.8}{
     \makebox[\linewidth][c]{
    \begin{tabular}{|lccc|l|r|r|c|r|r|c|r|r|c|}
    \hline
        ~ & ~ & ~ & ~ & ~ & \multicolumn{3}{|c|}{(1+1)~EA}  & \multicolumn{3}{|c|}{(1+1)~EA-HT}  & \multicolumn{3}{|c|}{($\mu$+1)~EA}   \\ 
        ~ & $B$ & $\alpha_p$ & $\delta_p$ &  & $\hat{p}_{Cheb}$ & std & stat & $\hat{p}_{Cheb}$ & std & stat & $\hat{p}_{Cheb}$ & std & stat \\ \hline
   
   
   uncorr\_100 & 2407 & 0.1 & 25 &  & 11073.5863 & 36.336192 & $2^{(*)}$, $3^{(*)}$ & 11069.0420 & 46.285605 & $1^{(*)}$, $3^{(*)}$ & 11057.4420 & 59.495722 & $1^{(*)}$, $2^{(*)}$\\ 
        ~ & ~ & ~ & 50 &  & 10863.1496 & 85.210231 & $2^{(*)}$, $3^{(*)}$ & 10889.4840 & 37.175095 & $1^{(*)}$, $3^{(*)}$ & 10883.7163 & 53.635972 & $1^{(*)}$, $2^{(*)}$ \\ 
        ~ & ~ & 0.01 & 25 &  & 10641.9089 & 63.402329 & $2^{(*)}$, $3^{(*)}$ & 10664.5974 & 29.489838 & $1^{(*)}$, $3^{(*)}$ & 10655.7251 & 43.869265 & $1^{(*)}$, $2^{(*)}$ \\ 
        ~ & ~ & ~ & 50 &  & 10054.6427 & 49.184220 & $2^{(*)}$, $3^{(*)}$ & 10066.2854 & 36.689426 & $1^{(*)}$, $3^{(*)}$ & 10064.8734 & 39.556767 & $1^{(*)}$, $2^{(*)}$ \\ 
        ~ & ~ & 0.001 & 25 &  & 9368.33053 & 46.894877 & $2^{(*)}$, $3^{(*)}$ & 9368.2483 & 34.904933 & $1^{(*)}$, $3^{(*)}$ & 9365.5257 & 40.458098 & $1^{(*)}$, $2^{(*)}$ \\ 
        ~ & ~ & ~ & 50 &  & 7475.44948 & 50.681386 & $2^{(*)}$, $3^{(*)}$ & 7490.6387 & 27.819516 & $1^{(*)}$, $3^{(*)}$ & 7497.5054 & 14.098629 & $1^{(*)}$, $2^{(*)}$ \\ \hline
        strong\_100 & 4187 & 0.1 & 25 &  & 8638.0428 & 68.740095 & $2^{(-)}$, $3^{(-)}$ & 8698.2592 & 64.435352 & $1^{(+)}$, $3^{(*)}$ & 8707.9271 & 49.633473 & $1^{(+)}$, $2^{(*)}$ \\ 
        ~ & ~ & ~ & 50 &  & 8441.9311 & 80.335771 & $2^{(-)}$, $3^{(-)}$ & 8483.1151 & 45.284814 & $1^{(+)}$, $3^{(*)}$ & 8481.0022 & 55.979520 & $1^{(+)}$, $2^{(*)}$ \\ 
        ~ & ~ & 0.01 & 25 &  & 8214.8029 & 56.705379 & $2^{(-)}$, $3^{(-)}$ & 8230.9642 & 42.084563 & $1^{(+)}$, $3^{(*)}$ & 8210.1448 & 55.148757 & $1^{(+)}$, $2^{(*)}$ \\ 
        ~ & ~ & ~ & 50 &  & 7512.3033 & 71.115520 & $2^{(-)}$, $3^{(-)}$ & 7563.5495 & 37.758812 & $1^{(+)}$, $3^{(*)}$ & 7554.7382 & 53.030592 & $1^{(+)}$, $2^{(*)}$ \\ 
        ~ & ~ & 0.001 & 25 &   & 6771.7849 & 58.314395 & $2^{(-)}$, $3^{(-)}$ & 6797.0376 & 42.944371 & $1^{(+)}$, $3^{(*)}$ & 6793.0387 & 43.492135 & $1^{(+)}$, $2^{(*)}$ \\ 
        ~ & ~ & ~ & 50 & & 4832.2084 & 88.887119 & $2^{(-)}$, $3^{(-)}$ & 4929.1483 & 52.858392 & $1^{(+)}$, $3^{(*)}$ & 4902.0006 & 44.976733 & $1^{(+)}$, $2^{(*)}$ \\ \hline \hline
        uncorr\_300 & 6853 & 0.1 & 25 &  & 34150.7224 & 167.458986 & $2^{(*)}$,$3^{(-)}$ & 34218.9806 & 164.65331 & $1^{(*)}$, $3^{(*)}$ & 34319.8500 & 177.580430 & $1^{(+)}$, $2^{(*)}$ \\ 
        ~ & ~ & ~ & 50 &  & 33749.8625 & 202.704754 & $2^{(*)}$,$3^{(-)}$ & 33827.9115 & 158.675094 & $1^{(*)}$, $3^{(*)}$  & 33992.7669 & 157.059148 & $1^{(+)}$, $2^{(*)}$  \\ 
        ~ & ~ & 0.01 & 25 & 
 & 33298.9369 & 215.463952 & $2^{(*)}$,$3^{(-)}$ & 33482.2230 & 186.361325 & $1^{(*)}$, $3^{(*)}$  & 33584.5679 & 129.781221 & $1^{(+)}$, $2^{(*)}$  \\ 
        ~ & ~ & ~ & 50 &  & 32326.5299 & 203.976688 & $2^{(*)}$,$3^{(-)}$ & 32332.5785 & 190.826414 & $1^{(*)}$, $3^{(*)}$  & 32504.2005 & 178.815508 & $1^{(+)}$, $2^{(*)}$  \\ 
        ~ & ~ & 0.001 & 25 &  & 30989.2470 & 242.861056 & $2^{(*)}$,$3^{(-)}$ & 31150.1989 & 187.329891 & $1^{(*)}$, $3^{(*)}$  & 31281.7283 & 181.280416 & $1^{(+)}$, $2^{(*)}$  \\ 
        ~ & ~ & ~ & 50 &  & 27868.2812 & 180.822780 & $2^{(*)}$,$3^{(-)}$ & 27923.1672 & 148.146917 & $1^{(*)}$, $3^{(*)}$  & 28024.3756 & 144.125407 & $1^{(+)}$, $2^{(*)}$  \\ \hline 
        strong\_300 & 13821 & 0.1 & 25 & ~ & 24795.3122 & 143.413609 & $2^{(-)}$,$3^{(*)}$& 24939.0678 & 94.941101 & $1^{(+)}$, $3^{(*)}$ & 24850.2784 & 135.783162 & $1^{(*)}$, $2^{(*)}$  \\ 
        ~ & ~ & ~ & 50 & ~ & 24525.1204 & 161.185000 & $2^{(-)}$,$3^{(*)}$ & 24585.2993 & 112.692219 & $1^{(+)}$, $3^{(*)}$ & 24589.7315 & 125.724850 & $1^{(*)}$, $2^{(*)}$\\ 
        ~ & ~ & 0.01 & 25 & ~ & 24047.9634 & 147.055910 & $2^{(-)}$,$3^{(*)}$ & 24138.6765 & 103.635233 & $1^{(+)}$, $3^{(*)}$ & 24121.8843 & 132.985469 & $1^{(*)}$, $2^{(*)}$ \\ 
        ~ & ~ & ~ & 50 & ~ & 22982.7691 & 169.377913 & $2^{(-)}$,$3^{(*)}$ & 23088.9710 & 81.229946 & $1^{(+)}$, $3^{(*)}$ & 23057.3537 & 160.481591 & $1^{(*)}$, $2^{(*)}$ \\ 
        ~ & ~ & 0.001 & 25 & ~ & 21689.9288 & 168.324844 & $2^{(-)}$,$3^{(*)}$ & 21824.5028 & 77.615607 & $1^{(+)}$, $3^{(*)}$ & 21786.4256 & 126.077269 & $1^{(*)}$, $2^{(*)}$ \\ 
        ~ & ~ & ~ & 50 & ~ & 18445.0866 & 125.747992 & $2^{(-)}$,$3^{(*)}$ & 18545.0084 & 98.512038 & $1^{(+)}$, $3^{(*)}$ & 18543.0067 & 96.526569 & $1^{(*)}$, $2^{(*)}$ \\ \hline \hline
        uncorr\_500 & 11243 & 0.1 & 25 & ~ & 58309.8801 & 266.319166 & $2^{(-)}$, $3^{(-)}$  & 58454.4069 & 295.624416 & $1^{(+)}$,$3^{(-)}$   & 58708.9818 & 157.245339 & $1^{(+)}$, $2^{(+)}$  \\ 
        ~ & ~ & ~ & 50 & ~ & 57783.7554 & 316.155254 & $2^{(-)}$, $3^{(-)}$ & 57927.2459 & 299.811063 & $1^{(+)}$, $3^{(-)}$  & 58267.9737 & 204.854052 & $1^{(+)}$, $2^{(+)}$ \\ 
        ~ & ~ & 0.01 & 25 & ~ & 57262.7885 & 330.683000 & $2^{(-)}$, $3^{(-)}$ & 57538.1166 & 260.869372 & $1^{(+)}$, $3^{(-)}$  & 57770.6524 & 178.217884 & $1^{(+)}$, $2^{(+)}$ \\ 
        ~ & ~ & ~ & 50 & ~ & 55916.4463 & 260.392742 & $2^{(-)}$, $3^{(-)}$ & 56086.6031 & 224.647105 & $1^{(+)}$, $3^{(-)}$  & 56321.8437 & 197.704397 & $1^{(+)}$, $2^{(+)}$ \\ 
        ~ & ~ & 0.001 & 25 & ~ & 54149.7603 & 364.823822 & $2^{(-)}$, $3^{(-)}$ & 54406.8517 & 249.217045 & $1^{(+)}$, $3^{(-)}$  & 54806.6815 & 170.082092  & $1^{(+)}$, $2^{(+)}$ \\ 
        ~ & ~ & ~ & 50 & ~ & 50124.9811 & 265.408552 & $2^{(-)}$, $3^{(-)}$ & 50312.3993 & 286.632525 & $1^{(+)}$, $3^{(-)}$  & 50672.0950 & 197.712768 & $1^{(+)}$, $2^{(+)}$ \\ \hline 
        strong\_500 & 22223 & 0.1 & 25 & ~ & 41104.1611 & 321.324820 & $2^{(*)}$, $3^{(*)}$ & 41523.8952 & 222.691441 & $1^{(*)}$, $3^{(*)}$ & 41458.8477 & 238.463764 & $1^{(*)}$, $2^{(*)}$ \\ 
        ~ & ~ & ~ & 50 & ~ & 40834.8213 & 243.308935 & $2^{(*)}$, $3^{(*)}$ & 41067.8559 & 229.706142 & $1^{(*)}$, $3^{(*)}$ & 41043.6296 & 173.586544 & $1^{(*)}$, $2^{(*)}$ \\ 
        
        ~ & ~ & 0.01 & 25 & ~ & 40248.7094 & 289.114488 & $2^{(*)}$, $3^{(*)}$ & 40567.8724 & 133.387473 & $1^{(*)}$, $3^{(*)}$ & 40448.5671 & 206.754226 & $1^{(*)}$, $2^{(*)}$ \\ 
        
        ~ & ~ & ~ & 50 & ~ & 38831.0336 & 298.888606 & $2^{(*)}$, $3^{(*)}$ & 39123.3879 & 120.110352 & $1^{(*)}$, $3^{(*)}$ & 38984.3118 & 169.701352 & $1^{(*)}$, $2^{(*)}$ \\ 
        ~ & ~ & 0.001 & 25 & ~ & 37201.8768 & 273.119842  & $2^{(*)}$, $3^{(*)}$ & 37490.7767 & 118.382846 & $1^{(*)}$, $3^{(*)}$ & 37395.7375 & 164.601365 & $1^{(*)}$, $2^{(*)}$ \\ 
        ~ & ~ & ~ & 50 & ~ & 32880.2003 & 272.672330 & $2^{(*)}$, $3^{(*)}$ & 33013.4535 & 172.524052 & $1^{(*)}$, $3^{(*)}$ & 32951.6884 & 206.900731 & $1^{(*)}$, $2^{(*)}$ \\ \hline 
    \end{tabular}
     }}
      \end{small}
\end{table*}
\section{Experimental Results} 
We consider now the results for the (1+1)~EA, (1+1)~EA with heavy-tailed mutation and 
($\mu$ + 1)~EA with heavy-tailed mutation and specific crossover algorithm based on Chebyshev's inequality and Hoeffding bounds for the benchmark set.

We first consider the optimization result obtained by the above mentioned algorithms using Chebyshev's inequality for the combinations of $\alpha_p$ and $\delta_p$. The experimental results are shown in Table~\ref{tab:Chebyshev}.
The results show that (1 + 1)~EA with heavy-tailed mutation is able to achieve higher average results for the instances with $100, 300, 500$ items for type bounded strongly correlated in most of the cases for all $\alpha_p$ and $\delta_p$ combinations.
It can be observed that for the instance with $100$ uncorrelated items the (1 + 1)~EA with heavy-tailed mutation outperforms all algorithms for $\alpha = 0.1$ and $\delta = 50$ and for $\alpha = 0.01$ $\delta = 25, 50$, respectively. However, $(\mu + 1)~EA$ can improve on the optimization result for small $\alpha_p$ and high $\delta_p$ values, i.e. $\alpha_p$ = $0.001$, $\delta_p$ = $50$. 

It can be observed that ($\mu$ + 1)~EA obtains the highest mean value for the instance with $300$ and $500$ items for the uncorrelated type. Furthermore, the statistical tests show that for all combinations of $\alpha_p$ and $\delta_p$ ($\mu$ + 1)~EA significantly outperforms (1 + 1)~EA and (1 + 1)~EA with heavy-tailed mutation. For example, for the instance with $300, 500$ items uncorrelated and for $100$ items bounded strongly correlated the statistical tests show that ($\mu$ + 1)~EA and (1 + 1)~EA with heavy-tailed mutation outperforms (1 + 1)~EA. For the other settings there is no statistical significant difference in terms of the results between all algorithms.  

Table~\ref{tab:Hoeffding} shows the results obtained by the above mentioned algorithms using Hoeffding bounds for the combinations of $\alpha_p$ and $\delta_p$ and statistical tests. 
The results show that (1+1)~EA with heavy-tailed mutation obtains the highest mean values compared to the results obtained by (1+1)~EA and ($\mu$ + 1)~EA for each setting for the instance with $100$ items for both types, uncorrelated and bounded strongly correlated. 
Similar to the previous investigation in the case for the instances with $300$ items, the (1+1)~EA with heavy-tailed mutation obtains the highest mean values compared to the results obtained by other algorithms in most of the cases. However, the solutions obtained by ($\mu+1$)~EA has significantly better performance than in the case for $\alpha_p$ = $0.1, 0.001$, $\delta_p$ = $25$.

The use of the heavy-tailed mutation when compared to the use of standard bit mutation in (1+1)~EA achieves a better performance for all cases.
Furthermore, the statistical tests show that for most combinations of $\alpha_p$ and $\delta_p$, the (1+1)~EA with heavy-tailed mutation significantly outperforms the other algorithms. Overall, it can be observed that using a heavy-tailed mutation both algorithms achieve higher average results. This can be due to the fact that a higher number of bits can be flipped than in the case of standard bit mutations flipping every bit with probability $1/n$.
\begin{table*}[!t]
    \centering
 \caption{Experimental results for the Hoeffding based function $\hat{p}_{Hoef}$.
 }
 \vspace{0.4cm}
 \begin{small}
 \scalebox{0.8}{
     \makebox[\linewidth][c]{
     \begin{tabular}{|lccc|l|r|r|c|r|r|c|r|r|c|}
    \hline
        ~ & ~ & ~ & ~ & ~ & \multicolumn{3}{|c|}{(1+1)~EA}  & \multicolumn{3}{|c|}{(1+1)~EA-HT}  & \multicolumn{3}{|c|}{($\mu$+1)~EA}   \\ 
         & $B$ & $\alpha_p$ & $\delta_p$ & ~ &  $\hat{p}_{Hoef}$ & std & stat & $\hat{p}_{Hoef}$ & std & stat & $\hat{p}_{Hoef}$ & std & stat \\ \hline
      
        uncorr\_100 & 2407 & 0.1 & 25 & & 10948.7292 & 90.633230 & $2^{(-)}$, $3^{(*)}$  & 11016.8190 & 49.768932 & $1^{(+)}$,$3^{(+)}$  & 10981.3880 & 37.569308 & $1^{(*)}$, $2^{(-)}$ \\ 
        ~ & ~ & ~ & 50 &  & 10707.1094 & 43.869094 & $2^{(-)}$, $3^{(*)}$ & 10793.1175 & 58.150646 & $1^{(+)}$,$3^{(+)}$ & 10708.6094 & 44.384035 & $1^{(*)}$,$2^{(-)}$ \\ 
        ~ & ~ & 0.01 & 25 & ~ & 10836.0906 & 91.332983 & $2^{(-)}$, $3^{(*)}$  & 10928.3054 & 45.464936 & $1^{(+)}$,$3^{(+)}$ & 10866.9831 & 45.408500 & $1^{(*)}$,$2^{(-)}$ \\ 
        ~ & ~ & ~ & 50 & ~ & 10482.6216 & 46.444510 & $2^{(-)}$, $3^{(*)}$  & 10611.1895 & 69.341044 & $1^{(+)}$,$3^{(+)}$ & 10477.2328 & 47.065426 & $1^{(*)}$,$2^{(-)}$ \\ 
        ~ & ~ & 0.001 & 25 & ~ & 10765.3289 & 68.565293 & $2^{(-)}$, $3^{(*)}$  & 10862.7124 & 49.091526 & $1^{(+)}$,$3^{(+)}$ & 10784.7286 & 38.187390 &  $1^{(*)}$,$2^{(-)}$\\ 
        ~ & ~ & ~ & 50 & ~ & 10263.9426 & 90.504901 & $2^{(-)}$, $3^{(*)}$ & 10487.5621 & 32.625499 & $1^{(+)}$,$3^{(+)}$ & 10309.8572 & 44.811326 & $1^{(*)}$,$2^{(-)}$\\ \hline
        strong\_100 & 4187 & 0.1 & 25 &  ~ & 8553.1744 & 74.046187 & $2^{(-)}$, $3^{(*)}$ & 8640.05156 & 39.413105 & $1^{(+)}$,$3^{(+)}$ & 8588.4894 & 53.878268 & $1^{(*)}$, $2^{(-)}$ \\ 
        ~ & ~ & ~ & 50 & ~ & 8264.8129 & 63.309264 & $2^{(-)}$, $3^{(*)}$ & 8398.4354 & 46.013234 &  $1^{(+)}$,$3^{(+)}$ & 8273.9670 & 41.403505 & $1^{(*)}$, $2^{(-)}$ \\ 
        ~ & ~ & 0.01 & 25 & ~ & 8422.9258 & 70.464985 & $2^{(-)}$, $3^{(*)}$& 8540.2095 & 63.072560 &  $1^{(+)}$,$3^{(+)}$ & 8447.8489 & 59.841707 & $1^{(*)}$, $2^{(-)}$ \\ 
        ~ & ~ & ~ & 50 & ~ & 7996.0193 & 65.822419 & $2^{(-)}$, $3^{(*)}$ & 8181.2980 & 45.667034 &  $1^{(+)}$,$3^{(+)}$ & 8013.1724 & 56.445427 & $1^{(*)}$, $2^{(-)}$ \\ 
        ~ & ~ & 0.001 & 25 & ~ & 8338.5159 & 57.880350 & $2^{(-)}$, $3^{(*)}$ & 8460.7513 & 53.402755 &  $1^{(+)}$,$3^{(+)}$ & 8362.9405 & 51.607219 & $1^{(*)}$, $2^{(-)}$ \\ 
        ~ & ~ & ~ & 50 & ~ & 7794.1245 & 80.411946 & $2^{(-)}$, $3^{(*)}$ & 8017.8843 & 53.266120 &  $1^{(+)}$,$3^{(+)}$ & 7833.5575 & 37.293481 & $1^{(*)}$, $2^{(-)}$ \\ \hline \hline
        uncorr\_300 & 6853 & 0.1 & 25 & ~ & 33831.9693 & 181.485453 & $2^{(-)}$, $3^{(-)}$ & 34118.7631 & 200.095911 & $1^{(+)}$, $3^{(*)}$ & 34129.8891 & 172.788856 & $1^{(+)}$, $2^{(*)}$\\ 
        ~ & ~ & ~ & 50 & ~ & 33380.4952 & 157.014552 & $2^{(-)}$, $3^{(-)}$ & 33715.2964 & 199.074378 & $1^{(+)}$, $3^{(*)}$ & 33662.2668 & 124.206823 & $1^{(+)}$, $2^{(*)}$ \\ 
        ~ & ~ & 0.01 & 25 & ~ & 33655.5737 & 234.136500 & $2^{(-)}$, $3^{(-)}$ & 34014.3456 & 200.488072 & $1^{(+)}$, $3^{(*)}$ & 33962.8643 & 161.560953 & $1^{(+)}$, $2^{(*)}$ \\ 
        ~ & ~ & ~ & 50 & ~ & 32933.5174 & 291.623690 & $2^{(-)}$, $3^{(-)}$ & 33327.8984 & 235.915481 & $1^{(+)}$, $3^{(*)}$ & 33277.4015 & 142.387738 & $1^{(+)}$, $2^{(*)}$ \\ 
        ~ & ~ & 0.001 & 25 & ~ & 33515.7445 & 219.707660 & $2^{(-)}$, $3^{(-)}$ & 33806.1572 & 184.532069 & $1^{(+)}$, $3^{(*)}$ & 33835.4528 & 149.327823 & $1^{(+)}$, $2^{(*)}$ \\ 
        ~ & ~ & ~ & 50 & ~ & 32706.4466 & 176.599463 & $2^{(-)}$, $3^{(-)}$ & 33112.7494 & 177.218747 & $1^{(+)}$, $3^{(*)}$ & 32940.4397 & 173.836538 & $1^{(+)}$, $2^{(*)}$ \\ \hline
        strong\_300 & 13821 & 0.1 & 25 & ~ & 24602.1254 & 171.596469 & $2^{(-)}$, $3^{(-)}$ & 24848.3209 & 100.078545 & $1^{(+)}$, $3^{(+)}$  & 24734.7210 & 127.268428 & $1^{(+)}$,$2^{(-)}$ \\ 
        ~ & ~ & ~ & 50 & ~ & 24184.8938 & 125.755762 & $2^{(-)}$, $3^{(-)}$ & 24457.7279 & 118.679623 & $1^{(+)}$, $3^{(+)}$ & 24205.9660 & 116.049342 & $1^{(+)}$,$2^{(-)}$ \\ 
        ~ & ~ & 0.01 & 25 & ~ & 24476.1412 & 159.274566 & $2^{(-)}$, $3^{(-)}$ & 24638.0751 & 105.088783 & $1^{(+)}$, $3^{(+)}$ & 24538.4199 & 101.959196 & $1^{(+)}$,$2^{(-)}$ \\ 
        ~ & ~ & ~ & 50 & ~ & 23653.3561 & 225.087307 & $2^{(-)}$, $3^{(-)}$ & 24060.0806 & 87.242862 & $1^{(+)}$, $3^{(+)}$ & 23830.8655 & 85.829604 & $1^{(+)}$,$2^{(-)}$ \\ 
        ~ & ~ & 0.001 & 25 & ~ & 24256.4468 & 173.293324 & $2^{(-)}$, $3^{(-)}$ & 24558.9506 & 105.253206 & $1^{(+)}$, $3^{(+)}$ & 24345.4340 & 144.094192 & $1^{(+)}$,$2^{(-)}$ \\ 
        ~ & ~ & ~ & 50 & ~ & 23377.6774 & 143.350899 & $2^{(-)}$, $3^{(-)}$ & 23843.7258 & 114.231223 & $1^{(+)}$, $3^{(+)}$ & 23520.1166 & 112.403711 & $1^{(+)}$,$2^{(-)}$ \\ \hline \hline
        uncorr\_500 & 11243 & 0.1 & 25 & ~ & 57995.2668 & 285.959899 & $2^{(-)}$, $3^{(-)}$  & 58286.1443 & 253.880622 & $1^{(+)}$, $3^{(-)}$ & 58527.7062&179.624520 & $1^{(+)}$,$2^{(+)}$ \\
        ~ & ~ & ~ & 50 & ~ & 57331.7069 & 319.089163 & $2^{(-)}$, $3^{(-)}$  & 57825.9426 & 227.649351 & $1^{(+)}$, $3^{(-)}$ & 57899.9614 & 167.585846 & $1^{(+)}$,$2^{(+)}$ \\ 
        
        ~ & ~ & 0.01 & 25 & ~ & 57757.1719 & 290.254639 & $2^{(-)}$, $3^{(-)}$  & 58023.1930 & 277.702516 & $1^{(+)}$, $3^{(-)}$ & 58224.2474 & 211.715398 & $1^{(+)}$,$2^{(+)}$ \\ 
        
        ~ & ~ & ~ & 50 & ~ & 56787.0897 & 411.706381 & $2^{(-)}$, $3^{(-)}$ & 57367.9869 & 206.916491 & $1^{(+)}$, $3^{(+)}$ & 57309.9927 & 227.397029 & ~$1^{(+)}$,$2^{(-)}$ \\ 
        
        ~ & ~ & 0.001 & 25 & ~ & 57519.6613 & 379.930530 & $2^{(-)}$, $3^{(-)}$  & 57910.4812 & 250.540248 & $1^{(+)}$, $3^{(-)}$ & 58052.4481 & 182.866780 & $1^{(+)}$,$2^{(+)}$ \\ 
        
        ~ & ~ & ~ & 50 & ~ & 56446.5408 & 273.663433 & $2^{(-)}$, $3^{(-)}$  & 57018.3566 & 253.684943 & $1^{(+)}$, $3^{(+)}$ & 56942.4016 & 183.464200 & $1^{(+)}$,$2^{(-)}$ \\ \hline

        strong\_500 & 22223 & 0.1 & 25 & ~ & 41060.1634 & 306.686391 & $2^{(-)}$, $3^{(*)}$ & 41397.7895 & 146.844521 & $1^{(+)}$, $3^{(+)}$ & 41186.7266& 213.577571 & $1^{(*)}$, $2^{(-)}$ \\ 
        
        ~ & ~ & ~ & 50 & ~ & 40244.7545 & 272.646652 & $2^{(-)}$, $3^{(*)}$ & 40897.9183 & 231.639926 & $1^{(+)}$, $3^{(+)}$ & 40543.1279 & 221.615657 & $1^{(*)}$, $2^{(-)}$  \\ 
        
        ~ & ~ & 0.01 & 25 & ~ & 40800.7084 & 271.459688 & $2^{(-)}$, $3^{(*)}$ & 41204.2676 & 179.999423 & ~$1^{(+)}$, $3^{(+)}$& 40967.2373 & 229.232904 & $1^{(*)}$, $2^{(-)}$ \\ 
        
        ~ & ~ & ~ & 50 & ~ & 39839.2235 & 271.298804 & $2^{(-)}$, $3^{(*)}$ & 40445.3621 & 157.093438 & $1^{(+)}$, $3^{(+)}$ & 40012.5861 & 189.516720 & $1^{(*)}$, $2^{(-)}$  \\ 
        
        ~ & ~ & 0.001 & 25 & ~ & 40561.9235 & 348.722449 & $2^{(-)}$, $3^{(*)}$ & 41038.9246 & 136.670185 & $1^{(+)}$, $3^{(+)}$ & 40768.4056 & 206.509572 & $1^{(*)}$, $2^{(-)}$  \\ 
        
        ~ & ~ & ~ & 50 & ~ & 39404.7836 & 249.449911 & $2^{(-)}$, $3^{(*)}$ & 40087.0447 & 167.453651 & $1^{(+)}$, $3^{(+)}$ & 39561.6572 & 216.629134 & , $1^{(*)}$, $2^{(-)}$  \\ \hline
            \end{tabular}
             }}
             \label{tab:Hoeffding}
              \end{small}
\end{table*}

\section{Conclusions}
Stochastic problems play an important role in many real-world applications. Based on real-world problems where profits in uncertain environments should be guaranteed with a good probability, we introduced the knapsack problem with chance constrained profits. We presented fitness functions for different stochastic settings that allow to maximize the profit value $P$ such that the probability of obtaining a profit less than $P$ is upper bounded by $\alpha_p$. In our experimental study, we examined different types of evolutionary algorithms and compared their performance on stochastic settings for classical knapsack benchmarks.

\section*{Acknowledgements}
We thank Simon Ratcliffe, Will Reid and Michael Stimson for very useful discussions on the topic of this paper. This work has been supported by the Australian Research Council (ARC) through grant FT200100536, and by the South Australian Government through the Research Consortium "Unlocking Complex Resources through Lean Processing".

\bibliographystyle{unsrt}
\bibliography{Bib.bib}

\end{document}